\documentclass[american,a4paper,runningheads,envcountsect]{llncs}
\pdfoutput=1
\usepackage{babel}
\usepackage[utf8]{inputenc}
\usepackage[T1]{fontenc}

\usepackage{amsmath}
\usepackage{amsthm}
\usepackage{amssymb}
\usepackage{mathtools}
\usepackage{mathrsfs}

\usepackage{dsfont} % For double stroke math digit

\usepackage{graphicx}
\usepackage{float} % so \figure[H]{...} works

\usepackage{subcaption}
\usepackage[ruled,vlined,linesnumbered]{algorithm2e}
%\SetAlgorithmName{Strategy}{strategy}{List of Strategies}

\usepackage{tikz}
\usetikzlibrary{shapes,arrows,positioning,calc} 
\usetikzlibrary{matrix}

\usepackage{csquotes}
\usepackage{booktabs}
\usepackage{enumitem}

\usepackage[shrink=30,babel,kerning=true]{microtype}

% \usepackage{paralist}
% \usepackage[compact]{titlesec}
% \titlespacing{\section}{0pt}{2ex}{1ex}
% \titlespacing{\subsection}{0pt}{1ex}{0ex}
% \titlespacing{\subsubsection}{0pt}{0.5ex}{0ex}
\usepackage[subtle]{savetrees}

\usepackage{fca_mod}
 
\usepackage{todonotes}
\presetkeys{todonotes}{color=blue!5}{}

\usepackage[backend=biber,style=numeric-comp,doi=false,isbn=false,%
url=false]{biblatex}
\addbibresource{bibfile2.bib}
 
% Use just in case
% \renewcommand{\bibfont}{\small}
 
\usepackage[colorlinks,citecolor=blue,linkcolor=blue,urlcolor=black,linktocpage]{hyperref}
%%% proper pdf metadata setup via hyperref
%% title
%% author
%% subject (abstract)
%% keywords
\setcounter{tocdepth}{3} %% define depth of toc / and of pdf bookmark outline
\hypersetup{
  bookmarks=true,
  bookmarksnumbered=true,
  pdftitle={Triadic Exploration and Exploration with Multiple Experts},
  pdfauthor={\textcopyright Maximilian Felde and Gerd Stumme},
  pdfsubject={ Formal Concept Analysis (FCA) provides a method called \emph{attribute exploration} which helps a domain expert discover structural dependencies in knowledge domains that can be represented by a formal context (a cross table of objects and attributes).  Triadic Concept Analysis is an extension of FCA that incorporates the notion of conditions.  Many extensions and variants of attribute exploration have been studied but only few attempts at incorporating multiple experts have been made.  In this paper we present \emph{triadic exploration} based on Triadic Concept Analysis to explore \emph{conditional attribute implications} in a triadic domain.  We then adapt this approach to formulate attribute exploration with multiple experts that have different views on a domain.},
  pdfkeywords={Formal~Concept~Analysis; Triadic~Concept~Analysis; Attribute~Exploration},
  }

\usepackage[all]{hypcap}

\usepackage{cleveref}
\let\cref\Cref
\crefname{strategy}{Strategy}{Strategies}
% \crefname{algorithm}{Strategy}{Strategies}
% \Crefname{algorithm}{Strategy}{Strategies}
\crefname{defn}{Definition}{Definitions}
\Crefname{defn}{Definition}{Definitions}

\newtheorem{thm}{Theorem}[section]

\newtheorem{prop}[thm]{Proposition}

\theoremstyle{definition}

\theoremstyle{remark}
\newtheorem{exmpl}[thm]{Example}

\renewcommand{\GMI}[1][]{(G,M,\relI_{#1})}
\newcommand{\GMII}[1][]{(G_{#1},M,\relI_{#1})}

\renewcommand{\implies}{\rightarrow}

\newcommand{\cimplies}[1]{\xrightarrow[]{#1}}

\newcommand{\GMBY}[1][]{\left(G,M,B,Y^{#1}\right)}
\newcommand{\K}{\context}
\renewcommand{\L}{\mathcal{L}}
\newcommand{\T}{\context[T]}
\newcommand{\E}{\context[E]}
\newcommand{\C}{\context[C]}

 %theories

\newcommand{\Impm}{\ensuremath{\operatorname{Imp}_M}}

\newcommand{\Sat}[1][\context]{\operatorname{Sat}(#1)}

\renewcommand{\epsilon}{\varepsilon}
\renewcommand{\phi}{\varphi}

%%%% Probably one-off commands
\newcommand{\Gex}{G_{\text{ex.}}}
\newcommand{\Mex}{M_{\text{ex.}}}
\newcommand{\Bex}{B_{\text{ex.}}}
\newcommand{\Tex}{\T_{\text{ex.}}}
\newcommand{\Mo}{\text{Mo-Fr}}
\renewcommand{\Sat}{\text{Sat}}
\newcommand{\Sun}{\text{Sun}}
 
%\setlength{\textfloatsep}{9pt}
%\setlength{\belowcaptionskip}{-10pt}
% \makeindex
 
\begin{document}
%\setcounter{tocdepth}{3}
%\tableofcontents

\title{Triadic Exploration and Exploration with Multiple Experts}
\subtitle{}

\author{Maximilian Felde\inst{1,2} \and Gerd Stumme\inst{1,2}}

\date{\today} 
 
\institute{%
  Knowledge \& Data Engineering Group,
  University of Kassel, Germany\\[0.5ex]
  \and
  Interdisciplinary Research Center for Information System Design\\
  University of Kassel, Germany\\[0.5ex]
  \email{felde@cs.uni-kassel.de, stumme@cs.uni-kassel.de}
}
\maketitle

\begin{abstract}

  Formal Concept Analysis (FCA) provides a method called \emph{attribute exploration} which helps a domain expert discover structural dependencies in knowledge domains that can be represented by a formal context (a cross table of objects and attributes).
  Triadic Concept Analysis is an extension of FCA that incorporates the notion of conditions.
  Many extensions and variants of attribute exploration have been studied but only few attempts at incorporating multiple experts have been made.
  In this paper we present \emph{triadic exploration} based on Triadic Concept Analysis
  to explore \emph{conditional attribute implications} in a triadic domain.
  We then adapt this approach to
   formulate attribute exploration with multiple experts that have different views on a domain.
 
\end{abstract}

\keywords{Formal~Concept~Analysis, Triadic~Concept~Analysis, Attribute~Exploration}
 
\section{Introduction}
\label{sec:introduction}
Attribute exploration~\cite{ganter1984two} is a well established knowledge acquisition method from the field of Formal Concept Analysis (FCA)~\cite{GanterWille1999}.
Attribute exploration works on domains that can be represented as binary tabular data of objects and attributes (also called features or properties).
It helps a domain expert to uncover the dependency structure of attributes  of the domain.
  For non-binary tabular data the method of \emph{conceptual scaling}, cf. \cite{ganter89}, can be used to transform non-binary attributes into binary ones.

Attribute exploration is based on the idea that we extend domain information through a domain expert.  To this end, attribute exploration uses a question-answer scheme to extract dependency information about attributes.
The questions are in the form of \emph{implications}, for example, \emph{do attributes A and B imply attribute C?} (also written as $AB\implies C?$).
The expert's task is to confirm or refute the validity of such implications in the domain.
If the expert refutes the validity of an implication she has to offer a counterexample, for example, in case of the question $AB\implies C?$ an object of the domain that has the attributes $A$ and $B$ but lacks attribute $C$.

The attribute exploration algorithm asks these questions in an optimized manner such that the expert has to answer as few questions as possible until the validity of every conceivable implication can be inferred from the answers given by the expert.  This is the case when every implication either follows from the set of implications accepted as valid or is contradicted by one of the examples given by the expert. 

The basic version of attribute exploration requires an all-knowing expert of the domain, i.e. an expert who can answer any question about the domain correctly.
It was introduced by Ganter in \cite{ganter1984two}.
Since then, many variants and extensions of attribute exploration have been studied.
A good overview can be found in the book \emph{Conceptual Exploration} by Ganter and Obiedkov~\cite{ganter2016conceptual}.
These extensions and variants notably include: Attribute exploration with background knowledge and exceptions~\cite{ganter1999attribute,stumme96attribute}, where the idea is to support the exploration with prior knowledge about some of the relations between attributes, for example if one attribute is the negation of another; attribute exploration with partial information~\cite{holzer2001dissertation,holzer2004knowledgeP1,holzer2004knowledgeP2}, where the expert is not required to be all-knowing and is also allowed to answer \emph{I do not know} in addition to confirming or refuting a question. Further, the expert is not required to fully specify a counterexample as long as the specified parts contradict the implication in question; and a sketch of how to explore triadic formal contexts~\cite{ganter2016conceptual,GanterObiedkov04Triadic}, where the idea of attribute exploration is transferred to triadic concept analysis (an extension of FCA with conditions~\cite{lehmann1995triadic}). We elaborate further on this in \cref{sec:triadic-exploration}.

However, most of the extensions and variants of attribute exploration that have been studied are based on the idea of a single expert answering the questions.
As far as we know, there exist only a few papers that mention exploration with multiple experts, notably: Paper \cite{conf/iccs/Kriegel16} deals with how to perform exploration in parallel and potentially offers a way to speed up the exploration with multiple experts; \cite{conf/iccs/HanikaZ18} addresses collaborative conceptual exploration based on the notions of local experts for subdomains of a given knowledge domain; and \cite{DBLP:journals/corr/abs-1908-08740} studies attribute exploration in a collaborative exploration setting with multiple experts who share the same view on the domain but only have partial knowledge thereof.

When we explore a domain with multiple experts, one of the fundamental problems we face is that different views on a domain, for example
different opinions whether an object has an attribute or not, or whether an implication is valid or not in a domain
, are impossible to resolve by combining different pieces of information into one.
Either, because there is no clear \emph{right} or \emph{wrong}, e.g. in case of opinions, or simply because we can not know which information to trust most.
And, even if we used methods such as majority-voting on information, there is a reasonable chance that the result is not always correct.
Combined with the inherent non-robustness of implication theories, i.e., small changes in the underlying data can lead to a very different theory, this suggests that merging different views on a domain is a bad idea for attribute exploration.
If we take a closer look at the publications mentioned before, we see that all three avoid this issue in their own way.
In \cite{conf/iccs/Kriegel16} the experts all have the same complete knowledge about the domain; in \cite{conf/iccs/HanikaZ18} the local experts have partial knowledge about the same consistent domain knowledge; and, in \cite{DBLP:journals/corr/abs-1908-08740} the problem was also avoided by defining expert knowledge as partial knowledge of some consistent domain knowledge.

Attribute exploration where multiple experts can have truly different and even opposing views on the domain has to the best of our knowledge not yet been studied.
To this end we develop \emph{triadic exploration} based on ideas presented by Ganter and Obiedkov in \cite{GanterObiedkov04Triadic}.
We then adapt triadic exploration to the setting of multiple experts with different views on a domain and thus provide a step in the direction of attribute exploration with multiple experts.

The paper is structured as follows: We begin by giving a brief introduction to the problem
in \cref{sec:introduction}.
We recollect some fundamentals of Formal and Triadic Concept Analysis in \cref{sec:dyadic-triadic-contexts}, in particular \emph{formal} and \emph{triadic contexts}, \emph{attribute implications}, the \emph{relative canonical base} and \emph{attribute exploration}.
In \cref{sec:triadic-exploration}, we discuss implications in the triadic setting, in particular, we focus on \emph{conditional attribute implications}.
Subsequently, we formulate \emph{triadic exploration}.
In \cref{sec:appl-mult-experts}, we discuss how to adapt \emph{triadic exploration} to model
attribute exploration with multiple experts with different views.
Finally, \cref{sec:conclusion-outlook} contains conclusion and outlook.  
Note that for this paper we do not provide a separate section for related work, instead we address related work throughout the paper whenever appropriate.

\section{Dyadic and Triadic Formal Contexts}
\label{sec:dyadic-triadic-contexts}
In this section we recollect the fundamentals of (dyadic) Formal Concept Analysis and Triadic Formal Concept Analysis (TCA).
We mostly rely on \cite{Wille82,GanterWille1999} for FCA and on \cite{lehmann1995triadic,wille1995basic} for TCA.
We begin with the definition of \emph{formal contexts} and associated notions.  We then introduce \emph{triadic contexts} and give an example which will serve as our running example for the remainder of this paper.  Afterwards, we briefly cover \emph{attribute implications}, the \emph{relative canonical base} and \emph{attribute exploration}.
This serves as a foundation for  \cref{sec:triadic-exploration}, where we look at \emph{implications in the triadic setting} and subsequently develop \emph{triadic exploration}.

\subsection{Formal Concept Analysis}
Formal Concept Analysis was introduced by Wille in \cite{Wille82}.
As the theory matured, Ganter and Wille compiled the mathematical
foundations of the theory in \cite{GanterWille1999}.
A \emph{formal context} $\context = \GMI$ consists of a set $G$ of objects, a set $M$ of attributes and an incidence relation
$I\subseteq G\times M$ with $(g,m)\in I$ meaning \emph{object $g$
  has attribute $m$}.
We define two derivation operators
$(\cdot)':\mathcal{P}(M)\rightarrow \mathcal{P}(G)$ and
$(\cdot)':\mathcal{P}(G)\rightarrow \mathcal{P}(M)$ in the following
way: For a set of objects
$A\subseteq G$, the set of \emph{attributes common to the
  objects in $A$} is provided by
  $A' \coloneqq \{ m \in M \mid \forall g\in A: (g,m)\in I \}.$
Analogously, for a set of attributes $B\subseteq M$, the set
of \emph{objects that have all the attributes from $B$} is provided by
  $B' \coloneqq \{ g \in G \mid \forall m\in B: (g,m)\in I \}.$
A \emph{formal concept} of a formal context $\context = \GMI$ is a
pair $(A,B)$ with $A\subseteq G$ and $B\subseteq M$ such that $A'=B$
and $A=B'$.  We call $A$ the \emph{extent} and $B$ the \emph{intent}
of the formal concept $(A,B)$.  The set of all formal concepts of a
context $\context$ is denoted by $\mathfrak{B}(\context)$.
Note that for any set $A \subseteq G$ the set $A'$ is the intent of a
concept and for any set $B\subseteq M$ the set $B'$ is the extent of a
concept.
% 
% The canonical order on the set of concepts is induced by the subset relation on the set of extents.
% Given two concepts $C_1=(A_1,B_1),C_2=(A_2,B_2)$ we say $C_1$ is a
% subconcept of $C_2$ (or equivalently $C_2$ is a superconcept of
% $C_1$) iff $A_1\subseteq A_{2}$ ($B_1\supseteq B_{2}$)
% 
The subconcept-superconcept relation on $\mathcal{B}(\context)$ is
formalized by:
$(A_1,B_1)\leq(A_2,B_2) :\Leftrightarrow A_1\subseteq A_2
(\Leftrightarrow B_1 \supseteq B_2)$.
The set of concepts together with this order relation
$(\mathfrak{B}(\context),\leq)$ forms a complete lattice, the
\emph{concept lattice}.
The vertical combination of two formal contexts
$\context_i=\GMII[i], i\in\{1,2\}$ without the same set of attributes $M$
is called the \emph{subposition} of $\context_1$ and
$\context_2$. Formally, it is defined as
$(\dot{G}_1 \cup \dot{G}_2,M,\dot{I}_1\cup \dot{I}_2)$, where
$\dot{G}_i:=\{i\}\times G$ and
$\dot{I}_i:= \{((i,g),(i,m))|(g,m)\in I_i\}$ for $i\in \{1,2\}$.
The \emph{subposition} of a set of contexts on the same set of attributes is defined analogously and we denote this by \emph{${subpos(\cdot)}$}.

\subsection{Triadic Concept Analysis}
Triadic Concept Analysis (TCA) was introduced by Lehmann and Wille in \cite{lehmann1995triadic} as an extension to Formal Concept Analysis with conditions. In particular they introduced the notion of \emph{triadic concepts} for which  Wille proceeded to show the basic theorem of triadic concept analysis in \cite{wille1995basic} -- clarifying the connection between triadic concepts and complete tri-lattices, analogous to the dyadic case.

The basic structure in TCA is a \emph{triadic context} which is similar to the formal context in FCA.
A \emph{triadic context} is defined as a quadruple $\T=\GMBY$, where $G,M$ and $B$ are sets and $Y\subseteq G\times M \times B$ is a ternary relation on these sets.
The elements of $G,M$ and $B$ are called objects, attributes and conditions respectively. For $g\in G$, $m\in M$ and $b\in B$ with $(g,m,b)\in Y$ we say that \emph{object $g$ has attribute $m$ under condition $b$}.
The conditions are understood in a broad sense~, cf. \cite{wille1995basic}: They comprise, amongst others, relations, interpretations, meanings, purposes and reasons concerning the connections of objects and attributes.

\begin{exmpl}
\label{ex:triadic-context}
The following example\footnote{The example is similar to the one given in \cite{ganter2016conceptual}, which inspired it.} will serve as our running example throughout the paper. 
It shows the situation of public transport at the train station Bf. Wilhelmshöhe with direction to the city center in Kassel.
From Bf. Wilhelmshöhe you can travel by one of four bus lines (52, 55, 100 and 500), four tram lines (1, 3, 4 and 7), one night tram (N3) and one regional tram (RT5) to the city center.
These are the objects $\Gex$ of our context.
The buses and trams leave the station at different times throughout the day.
The attributes $\Mex$ of our context are the aggregated leave-times, more specifically, we have split each day in five distinct time-slots: early morning (4:00 to 7:00), working hours (7:00 to 19:00), evening (19:00 to 21:00), late evening (21:00 to 24:00) and night (0:00 to 4:00).
The conditions $\Bex$ of our context are the days of the week.
A bus or tram line is related to a time-slot on a day if a bus or tram of this line leaves the station at least once during the time-slot on the day. This describes the ternary relation $Y\subseteq \Gex \times \Mex \times \Bex$.
We have aggregated Monday to Friday into a single condition, because the schedule is the same for these days.
Thus, we obtain the context $\Tex=(\Gex,\Mex,
\Bex,Y)$.
% $$\T_{\text{ex.}}=\{\{1,3,4,7,52,55,\text{N3},100,500,\text{RT5}\}, \{\text{early-morning}.\text{working-hours},\text{evening},\text{late-evening},\text{night}\}, \{\text{Mo-Fr},\text{Sat},\text{Sun}\},Y\}$$
The resulting triadic context can be found in \cref{fig:triadic-context}.

\begin{figure}[t]
  \makebox[\textwidth][c]{
  \begin{minipage}{0.35\textwidth}
      \centering
      
 \resizebox{1.11\linewidth}{!}{\begin{tikzpicture}[ matrixstyle/.style={matrix of math nodes,
               nodes in empty cells, anchor=north east, fill=white,opacity=0.7}]
\matrix (Sun) [draw,matrixstyle]{
\phantom{\times} & \times & \times & \times & \phantom{\times}\\
\times & \times & \times & \times & \phantom{\times}\\
\times & \times & \times & \times & \phantom{\times}\\
\phantom{\times} & \phantom{\times} & \phantom{\times} & \phantom{\times} & \phantom{\times}\\
\phantom{\times} & \phantom{\times} & \phantom{\times} & \phantom{\times} & \phantom{\times}\\
\phantom{\times} & \phantom{\times} & \phantom{\times} & \phantom{\times} & \phantom{\times}\\
\phantom{\times} & \phantom{\times} & \phantom{\times} & \phantom{\times} & \times\\
\times & \times & \times & \times & \phantom{\times}\\
\phantom{\times} & \times & \times & \times & \phantom{\times}\\
\phantom{\times} & \times & \times & \times & \phantom{\times}\\
};\matrix (Sat) [draw,matrixstyle] at ($(Sun.north east) - (1,1)$){
\phantom{\times} & \times & \times & \times & \phantom{\times}\\
\times & \times & \times & \times & \phantom{\times}\\
\times & \times & \times & \times & \phantom{\times}\\
\phantom{\times} & \times & \phantom{\times} & \phantom{\times} & \phantom{\times}\\
\times & \times & \phantom{\times} & \phantom{\times} & \phantom{\times}\\
\phantom{\times} & \phantom{\times} & \phantom{\times} & \phantom{\times} & \phantom{\times}\\
\phantom{\times} & \phantom{\times} & \phantom{\times} & \phantom{\times} & \times\\
\times & \times & \times & \times & \phantom{\times}\\
\phantom{\times} & \times & \times & \times & \phantom{\times}\\
\times & \times & \times & \times & \phantom{\times}\\
};\matrix (Mo-Fr) [draw,matrixstyle] at ($(Sat.north east) - (1,1)$){
\times & \times & \times & \times & \phantom{\times}\\
\times & \times & \times & \times & \phantom{\times}\\
\times & \times & \times & \times & \phantom{\times}\\
\times & \times & \times & \phantom{\times} & \phantom{\times}\\
\times & \times & \times & \times & \phantom{\times}\\
\times & \times & \phantom{\times} & \phantom{\times} & \phantom{\times}\\
\phantom{\times} & \phantom{\times} & \phantom{\times} & \phantom{\times} & \phantom{\times}\\
\times & \times & \times & \times & \phantom{\times}\\
\times & \times & \times & \times & \times\\
\times & \times & \times & \times & \phantom{\times}\\
};\draw[dotted] (Sun.north west) -- (Mo-Fr.north west);
\draw[dotted] (Sun.north east) -- (Mo-Fr.north east);
\draw[dotted] (Sun.south east) -- (Mo-Fr.south east);
\node[left] at ($(Mo-Fr-7-1)+(-0.5,0)$) {N3};
\node[left] at ($(Mo-Fr-1-1)+(-0.5,0)$) {1};
\node[left] at ($(Mo-Fr-4-1)+(-0.5,0)$) {7};
\node[left] at ($(Mo-Fr-6-1)+(-0.5,0)$) {55};
\node[left] at ($(Mo-Fr-3-1)+(-0.5,0)$) {4};
\node[left] at ($(Mo-Fr-2-1)+(-0.5,0)$) {3};
\node[left] at ($(Mo-Fr-9-1)+(-0.5,0)$) {500};
\node[left] at ($(Mo-Fr-5-1)+(-0.5,0)$) {52};
\node[left] at ($(Mo-Fr-10-1)+(-0.5,0)$) {RT5};
\node[left] at ($(Mo-Fr-8-1)+(-0.5,0)$) {100};
\node[below,rotate=45,anchor=east] at ($(Mo-Fr-10-1)+(0,-0.5)$) {early-morning};
\node[below,rotate=45,anchor=east] at ($(Mo-Fr-10-2)+(0,-0.5)$) {working-hours};
\node[below,rotate=45,anchor=east] at ($(Mo-Fr-10-3)+(0,-0.5)$) {evening};
\node[below,rotate=45,anchor=east] at ($(Mo-Fr-10-4)+(0,-0.5)$) {late-evening};
\node[below,rotate=45,anchor=east] at ($(Mo-Fr-10-5)+(0,-0.5)$) {night};
\node[right] at ($(Sun-10-5)+(0.5,-0.25)$) {Sun};
\node[right] at ($(Sat-10-5)+(0.5,-0.25)$) {Sat};
\node[right] at ($(Mo-Fr-10-5)+(0.5,-0.25)$) {Mo-Fr};
\end{tikzpicture}}
\caption{Triadic context \\$\Tex$ of \cref{ex:triadic-context}}
  \label{fig:triadic-context}
  \end{minipage}
  \begin{minipage}{0.65\textwidth}
    \centering
    \resizebox{1.0\linewidth}{!}{\input{context-family}}
  \caption{The triadic context $\Tex$ from \cref{ex:triadic-context} represented as context family of the condition contexts $\K_{\text{Mo-Fr}}$, $\K_{\text{Sat}}$ and $\K_{\text{Sun}}$}
  \label{fig:triadic-context-as-context-family}
\end{minipage}
}
\end{figure} 
\end{exmpl}

Naturally, we can view the triadic context as a family of formal contexts, where each context represents one condition, basically slicing the triadic context vertically along the conditions.
In \cref{fig:triadic-context-as-context-family} we provide the resulting context family of our running example. 

Formally such a family of contexts representing a triadic context $\T=\GMBY$ is a set of contexts $\K_b,\ b\in B$ where $\K_b:= (G,M,I_b)$ with $(g,m)\in I_b :\Leftrightarrow (g,m,b)\in Y$.  We will refer to the contexts $\K_b$ as \emph{condition contexts} of the triadic context $\T$; for our example these are $\K_{\text{Mo-Fr}}$, $\K_{\text{Sat}}$ and $\K_{\text{Sun}}$.

\subsection{Attribute Implications}
Attribute implications are used to describe dependencies between attributes in a formal context.
In the following we give a brief introduction.
Let $M$ be a set of attributes.
(For a start, we do not require it to be related to a specific context.)
An \emph{attribute implication} over $M$ is a pair of subsets $A,B\subseteq M$ of $M$. We denote this by $A\implies B$. We call $A$ the \emph{premise} and $B$ the \emph{conclusion} of the implication $A\implies B$.

We denote the set of all implications over a set $M$ by $\Impm  = \{A\implies B| A,B \subseteq M\}$.

A subset $T\subseteq M$ \emph{respects} an attribute implication $A\implies B$ over $M$ if $A\not\subseteq T$ or $B\subseteq T$.  We then also call $T$ a \emph{model} of the implication.
$T$ \emph{respects a set} $\mathcal{L}$ of implications if $T$ respects all implications in $\mathcal{L}$.
An implication $A\implies B$ \emph{holds} in a set of subsets of $M$ if each of these subsets respects the implication.

For a formal context $\context=\GMI$ we say that an implication $A\implies B$ over $M$ \emph{holds in the context} if for every object $g\in G$ the object intent $g'$ respects the implication.
We then also call $A\implies B$ a \emph{valid implication} of $\context$.
An implication $A\implies B$ holds in $\K$ if and only if every object $g\in G$ that has all attributes in $A$ also has all attribute in $B$.
Further, an implication $A\implies B$ holds in $\context$ if and only if $B\subseteq A''$, or equivalently $B' \subseteq A'$.
An implication $A\implies B$ \emph{follows} from a set $\L$ of implications over $M$ if each subset of $M$ respecting $\L$ also respects $A\implies B$.  A family of implications is called \emph{closed} if every implication following from $\L$ is already contained in $\L$.  Closed sets of implications are also called \emph{implication theories}.

\subsubsection{Relative Canonical Base.}
The set of all implications that hold in a given context $\context$ have a canonical irredundant representation which is called the \emph{canonical base}, cf. \cite{GanterWille1999,guigues1986familles}. Stumme has generalized this representation to the case where some (background) implications are known \cite{stumme96attribute}, i.e. attribute implications that are known to hold based on prior knowledge. 

Given an formal context $\context=\GMI$ and a set of (background) implications $\mathcal{L}_0$ on $M$ that hold in the context $\context$. A \emph{pseudo-intent} of $\context$ \emph{relative to $\L_0$} is a set $P\subseteq M$
% with the properties:
where $P$ respects $\L_0$, $P\not = P''$ and if $Q\subseteq P,\ Q \not = P$, is a relative psuedo-intent of $\K$ then $Q''\subseteq P$.
The set $\L_{\K,\L_0} := \{P\implies P''| P \text{ relative pseudo-intent of } \K\}$ is called the \emph{canonical base} of $\K$ \emph{relative to $\L_0$}, or simply the \emph{relative canonical base}.
All implications in $\L_{\K,\L_0}$ hold in \K.
\begin{thm}[see \cite{GanterObiedkov04Triadic,stumme96attribute}]\label{thm:relative-canonical-base}
  If all implications of $\L_0$ hold in $\K$, then
  \begin{enumerate}
  \item each implication that holds in $\K$ follows from $\L \cup \L_0$, and
  \item $\L_{{\K,\L_0}}$ is irredundant w.r.t. 1.
  \end{enumerate}
\end{thm}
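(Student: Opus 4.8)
The plan is to prove both parts by first identifying the models (respecting sets) of $\L \cup \L_0$, where $\L \definedby \L_{\K,\L_0}$, as precisely the intents of $\K$, that is, the sets $X \subseteq M$ with $X = X''$. Since $\L_0$ holds in $\K$ by hypothesis, every intent respects $\L_0$; and every intent $X = X''$ respects each implication $P \implies P''$ of $\L$, because $P \subseteq X$ forces $P'' \subseteq X'' = X$ by monotonicity of the closure $(\cdot)''$. So one inclusion is immediate: every intent is a model of $\L \cup \L_0$.

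The converse inclusion is where the argument has substance. I would show that any $T \subseteq M$ respecting $\L \cup \L_0$ satisfies $T = T''$. Assume not: then $T$ respects $\L_0$ and, since $T \subseteq T''$ always holds, $T \neq T''$ gives $T \subsetneq T''$, hence $T'' \not\subseteq T$. Now I distinguish whether $T$ is itself a relative pseudo-intent. If it is, then $T \implies T'' \in \L$ is violated by $T$ (the premise $T$ lies in $T$ but $T'' \not\subseteq T$), contradicting that $T$ is a model. If $T$ is not a relative pseudo-intent, then, since it already satisfies the first two defining conditions, it must fail the third: there is a relative pseudo-intent $Q \subsetneq T$ with $Q'' \not\subseteq T$, and then $T$ violates $Q \implies Q'' \in \L$, again a contradiction. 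Thus every model of $\L \cup \L_0$ is an intent. Part~1 follows at once: if $A \implies B$ holds in $\K$ then $B \subseteq A''$, so for any model $T = T''$ with $A \subseteq T$ we get $B \subseteq A'' \subseteq T'' = T$, i.e. $T$ respects $A \implies B$.

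For Part~2 I would show that no single implication can be dropped from $\L$ without losing completeness. Fix a relative pseudo-intent $P$ and consider the set $P$ itself as a candidate model of $(\L \setminus \{P \implies P''\}) \cup \L_0$. By definition $P$ respects $\L_0$, and $P$ violates $P \implies P''$ since $P \subsetneq P''$. It remains to check that $P$ respects every other implication $Q \implies Q''$ of $\L$ with $Q$ a relative pseudo-intent and $Q \neq P$: if $Q \not\subseteq P$ this is vacuous, and if $Q \subsetneq P$ the third defining property of $P$ yields $Q'' \subseteq P$. Hence $P$ separates $\L \setminus \{P \implies P''\}$ together with $\L_0$ from the implication $P \implies P''$, which does hold in $\K$; so $P \implies P''$ cannot follow from the smaller set, establishing irredundancy. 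The one delicate point throughout is the self-referential definition of relative pseudo-intent: I expect the main obstacle to be handling the third condition cleanly, that is, making sure the contradiction arguments only ever invoke strictly smaller pseudo-intents so that the recursion stays well-founded and no circularity creeps in.
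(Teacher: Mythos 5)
Your proof is correct: the characterization of the models of $\L \cup \L_0$ as exactly the intents of $\K$, the case split on whether a non-closed model is itself a relative pseudo-intent (invoking the third defining condition when it is not), and the use of each relative pseudo-intent $P$ as a separating model for irredundancy constitute precisely the standard argument. Note that the paper itself gives no proof of this theorem --- it is imported from \cite{GanterObiedkov04Triadic,stumme96attribute} --- and your argument coincides with the one in those sources (the relativized Duquenne--Guigues argument, reducing to the classical case when $\L_0 = \emptyset$), with the only implicit hypothesis being, as you yourself observe, finiteness of $M$, which makes the recursive definition of relative pseudo-intents well-founded.
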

The notion of a relative canonical base combined with \cref{thm:relative-canonical-base} allows us to reduce the amount of questions that need to be posed during a triadic exploration.

\subsection{Attribute Exploration}
Attribute exploration~(\cite{ganter1984two}, cf. also \cite{ganter2016conceptual,GanterWille1999}) is a knowledge acquisition method based on a question-answer scheme to obtain the implication theory of a domain. 

Let us consider a domain (a formal context) $\GMI$ that we do not know completely and that we want to explore and a domain expert for this domain.  We start with a (possibly empty) set of known (background) implications $\L$ and a (possibly empty) set $G_E\subseteq G$ of known objects, represented as (possibly empty) formal context $\context[E]=\GMII[E]$.  In every step of the attribute exploration we have a set of already accepted implications $\L$ and a context of already provided counterexamples $\context[E]$.  The attribute exploration algorithm picks the next implication $A\implies B$ that does not follow from $\L$ and that holds in $\context[E]$.  It then asks the expert whether the implication truly holds in the domain.  The expert can either confirm that the implication holds or they can refute its validity by providing a counterexample, i.e., an object $g\in G$ whose intent does not respect the implication.  If the expert confirms the implication's validity in the domain, it is added to the set $\L$, otherwise the provided counterexample is added to the context of counterexamples $\context[E]$.  This process is repeated until there is no implication left to be asked.

After performing the attribute exploration we have the canonical base of implications
from which every valid implication in the domain follows. Furthermore, for every implication that is not valid, the set of examples contains a counterexample.

\section{Triadic Exploration}
\label{sec:triadic-exploration}
In this section we look at \emph{implications} in the triadic setting, in particular, we formally introduce \emph{conditional attribute implications}, and develop a \emph{triadic exploration} for Triadic Concept Analysis as proposed by Ganter and Obiedkov in \cite{ganter2016conceptual,GanterObiedkov04Triadic}.

\subsection{Conditional Attribute Implications}
In formal contexts (of type $\GMI$) the matter of \emph{implications} is fairly straightforward: There are \emph{attribute implications} to describe dependencies between attributes (and dually there are \emph{object implications}).
In triadic contexts, the notion of implication is not as simple.  This manifests in a multitude of types of implications that have been proposed:
The earliest suggestion for a \emph{triadic implication} came from Biedermann \cite{Biedermann98Foundation}, where he suggested the study of implications of the form $(R\implies S)_C$ which is interpreted as: \textit{If an object has all attributes from R under all conditions from C, then it also has all attributes from S under all conditions from C}.

In  \cite{GanterObiedkov04Triadic}, Ganter and Obiedkov studied some other types of implications for the triadic setting.  They introduced a stronger version of the triadic implication called \emph{conditional attribute implications} to describe dependencies that hold for some conditions.  The symmetry arising from the arbitrary choice of objects, attributes and conditions in a triadic context results in five more types of implications.  Further, they introduced another generalization of Biedermann's triadic implication called \emph{attribute$\times$condition implication} to express dependencies between combinations of attributes and conditions. 
For the remainder of this paper we will focus on \emph{conditional attribute implications}, because they best serve our goal of developing attribute exploration with multiple experts.

Given a triadic context $\T=\GMBY$,
a \emph{conditional attribute implication} is an expression of the form $R \cimplies{C} S$ where $R,S \subseteq M$, $C\subseteq B$, which reads as: \emph{$R$ implies $S$ under all conditions from $C$}.
A conditional attribute implication $R\cimplies{C}S$ \emph{holds} in a triadic context $\T$ iff for each condition $c\in C$ it holds that if an object $g\in G$ has all the attributes in $R$ it also has all the attributes in $S$.  This is the case if the implication $R\implies S$ holds in every conditional context $\K_c$ for $c\in C$.

\begin{prop}
  Let $\T=\GMBY$ and $\K_c$, $c\in B$, its respective condition contexts. For a conditional implication $R\cimplies{C}S$ with $R,S\subseteq M$ and $C\subseteq B$, the following statements are equivalent:
  \begin{enumerate}
  \item $R\cimplies{C}S$ holds in $\T$
  \item $R\implies S$ holds in $\K_c$ for every $c\in C$
  \item $R\implies S$ holds in $subpos(\{\K_c|c \in C\})$
  \end{enumerate}
\end{prop}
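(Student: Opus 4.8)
The plan is to prove the equivalence as a cycle $(1)\Rightarrow(2)\Rightarrow(3)\Rightarrow(1)$, or more economically by showing $(1)\Leftrightarrow(2)$ and $(2)\Leftrightarrow(3)$ separately. The equivalence $(1)\Leftrightarrow(2)$ is essentially immediate from the definitions: the statement that $R\cimplies{C}S$ holds in $\T$ was \emph{defined} to mean that for every $c\in C$, every object having all attributes in $R$ (under condition $c$) also has all attributes in $S$ (under condition $c$), and this is precisely the condition that $R\implies S$ holds in each condition context $\K_c$. So the real content lies in $(2)\Leftrightarrow(3)$, connecting validity across a family of contexts to validity in their subposition.

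Let me think about this. The key observation is the definition of subposition: $\subpos{\{\K_c\mid c\in C\}}$ has object set $\dot{\bigcup}_{c\in C}\dot{G}_c$ where $\dot{G}_c=\{c\}\times G$, the same attribute set $M$, and incidence $\dot{I}=\dot{\bigcup}_{c\in C}\dot{I}_c$ where $((c,g),m)\in\dot{I}$ iff $(g,m)\in I_c$. Crucially, the object intent of $(c,g)$ in the subposition equals the object intent of $g$ in $\K_c$, since the attribute set is shared and the disjoint-union construction does not mix incidences across conditions. Recalling from the excerpt that an implication holds in a context iff every object intent respects it, the plan is to spell out: $R\implies S$ holds in $\subpos{\{\K_c\mid c\in C\}}$ iff every object $(c,g)$ of the subposition has an intent respecting $R\implies S$, iff for every $c\in C$ every $g\in G$ has $g'$ (taken in $\K_c$) respecting $R\implies S$, iff $R\implies S$ holds in every $\K_c$. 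Each ``iff'' is a direct unfolding of definitions, pivoting on the fact that the objects of the subposition are exactly the pairs $(c,g)$ and that their intents coincide with the corresponding intents in the individual condition contexts.

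\textbf{The main obstacle}, such as it is, will be stating the intent-preservation property of the subposition cleanly rather than proving anything deep. I would make explicit that for $(c,g)\in\dot{G}_c$ its intent in the subposition is $\{m\in M\mid ((c,g),m)\in\dot{I}\}=\{m\in M\mid (g,m)\in I_c\}=g'$ computed in $\K_c$, so that respecting $R\implies S$ is literally the same condition on both sides. Because the attribute set $M$ is common to all condition contexts, the ``$\cup$'' in the subposition only enlarges the object set and never alters which attributes a given object carries; this is exactly why no information is lost or merged. Once this is observed, the biconditionals chain together with no calculation.

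In summary, I would write two short paragraphs: one remarking that $(1)\Leftrightarrow(2)$ is the definition of when a conditional attribute implication holds, and one establishing $(2)\Leftrightarrow(3)$ via the object-intent correspondence in the subposition together with the characterization (from the excerpt) of an implication holding in a context through its object intents. The proof is genuinely routine; the only thing to be careful about is invoking the correct definitions and making the intent correspondence between $\K_c$ and the subposition explicit.
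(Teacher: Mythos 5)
Your proposal is correct and follows essentially the same route as the paper's own proof: $(1)\Leftrightarrow(2)$ directly from the definitions of \emph{holds} in the triadic and dyadic settings, and $(2)\Leftrightarrow(3)$ from the definition of subposition together with the characterization of validity via object intents. Your write-up merely spells out the object-intent correspondence $(c,g)' = g'$ more explicitly than the paper does, which is a fair elaboration rather than a different argument.
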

\begin{proof}
  $1. \Leftrightarrow 2.$  follows directly from the definitions of \emph{holds} in the triadic and dyadic setting.
  $2. \Leftrightarrow 3.$ follows from the definition of subposition and that an implication $R\implies S$ holds in a context if and only if for every object $g$ the object intent respects the implication.
\end{proof}
\begin{exmpl}
  \label{ex:conditional-implications}
  In the context family of \cref{ex:triadic-context} in \cref{fig:triadic-context-as-context-family} we observe that the implication \emph{$\text{early-morning}\implies \text{working-hours}$} holds in all three condition contexts $\K_{\Mo}, \K_{\Sat}$ and $\K_{\Sun}$, hence,
%%%% OLD
$\emph{\text{early-morning}}\cimplies{\Mo,\Sat,\Sun}\emph{\text{working-hours}}$
holds in $\Tex$.
%%%% NEW
% $$\emph{\text{early-morning}}\cimplies{\Mo,\Sat,\Sun}\emph{\text{working-hours}} \qquad \text{ holds in $\Tex$.}$$
Whereas the implication \emph{$\text{working-hours}\implies \text{evening}$} only holds in the condition context $\K_{\Sun}$  because tram line $7$ is a counterexample in $\K_{\Sat}$ and bus line $55$ is a counterexample in $\K_{\Mo}$ and thus
%%%% OLD
$\emph{\text{working-hours}}\cimplies{\Sun}\emph{\text{evening}}$ holds in $\Tex$, but
$\emph{\text{working-hours}}\cimplies{\Mo,\Sat,\Sun}\emph{\text{evening}}.$ does not.
%%%% NEW
% $$\emph{\text{working-hours}}\cimplies{\Sun}\emph{\text{evening}} \qquad\text{ holds in $\Tex$, but} $$
% $$\emph{\text{working-hours}}\cimplies{\Mo,\Sat,\Sun}\emph{\text{evening}} \qquad\text{ does not.} $$
\end{exmpl}

Clearly, if a conditional implication $R\cimplies{C} S$ holds in a triadic context $\T$ then all conditional implications $R\cimplies{D}S$ with $D\subseteq C$ hold as well.  
Further, for every subset $C\subseteq B$ there is a set of conditional implications $R\cimplies{C}S$ that hold in $\T$.
This set of conditional implications for a fixed set of conditions $C$ is the implication theory of the subposition of condition contexts $subpos(\{\K_c| c\in C\})$.

\subsubsection{Context of Conditional Implications.}
A nice way to structure the conditional implications that hold in a triadic context $\T$ is to use the approach suggested by Ganter and Obiedkov, cf.~\cite{GanterObiedkov04Triadic}, and to introduce a \emph{context of conditional implications}:
Given a triadic context $\T$, we construct a formal context $\mathcal{C}_{imp}(\T) := (\Impm,B,I)$, where the set of all possible implications on $M$ is the object set, the set of conditions $B$ of the triadic context $\T$ is the set of attributes and the incidence relation $I$ is determined by
$$
(R\implies S) I c \ \vcentcolon\Leftrightarrow R\cimplies{c}S \text{ holds in } \T.
$$
 
The formal concepts of $\mathcal{C}_{imp}(\T)$ are pairs $(\L,\ C)$, where
%%% NEW
 $\L$ is a set of implications and $C$ is a set of conditions, such that $\L$ is the set of all implications $R\implies S$ for which $R\cimplies{C}S$ holds, and $C$ is the largest set of conditions for which this is the case.
%%% OLD
% \begin{enumerate}
% \item $\L$ is a set of implications,
% \item $C$ is a set of conditions, such that
% \item  $\L$ is the set of all implications $A\implies B$ for which $A\xrightarrow{C}B$ holds, and $C$ is the largest set of conditions for which this is the case.
% \end{enumerate}
%
 These concepts structure the set of conditional implications in a lattice ordered by the conditions for which they hold.
 Their extents form a system of implication theories.

\begin{exmpl}
For our running example we present the concept lattice of $\mathcal{C}_{imp}(\Tex)$ with simplified labels in \cref{fig:lattice-conditional-implications-example}:
The extent of the top node always contains the implications that hold under the empty set of conditions, i.e., the whole set $\Impm$. We omit this label.
For the other nodes we give the relative canonical base with respect to set of implications from all nodes below.
\begin{figure}[t]
  \centering 
  \resizebox{1.0\linewidth}{!}{ \colorlet{mivertexcolor}{black}
\colorlet{jivertexcolor}{black}
\colorlet{vertexcolor}{mivertexcolor!50}
\colorlet{bordercolor}{black!80}
\colorlet{linecolor}{gray}
\tikzset{vertexbase/.style={semithick, shape=circle, inner sep=2pt, outer sep=0pt, draw=bordercolor},%
  vertex/.style={vertexbase, fill=vertexcolor!45},%
  mivertex/.style={vertexbase, fill=mivertexcolor!45},%
  jivertex/.style={vertexbase, fill=jivertexcolor!45},%
  divertex/.style={vertexbase, top color=mivertexcolor!45, bottom color=jivertexcolor!45},%
  conn/.style={-, thick, color=linecolor}%
}
\begin{tikzpicture}
  \begin{scope} %for scaling and the like
    \begin{scope} %draw vertices
      \foreach \nodename/\nodetype/\xpos/\ypos in {%
        0/vertex/0.124/2.4,
        1/divertex/-1.0/3.35,
        2/divertex/1.5/4.0,
        3/divertex/-1.0/4.65,
        4/vertex/0.124/5.6
      } \node[\nodetype] (\nodename) at (\xpos, \ypos) {};
    \end{scope}
    \begin{scope} %draw connections
      \path (1) edge[conn] (3);
      \path (3) edge[conn] (4);
      \path (0) edge[conn] (2);
      \path (2) edge[conn] (4);
      \path (0) edge[conn] (1);
    \end{scope}
    \begin{scope} %add labels
      \foreach \nodename/\labelpos/\labelopts/\labelcontent in {%
        0/below//{\parbox[c]{9.0cm}{early-morning $\implies$ working-hours\\ working-hours, night $\implies$ late-evening, early-morning, evening\\ late-evening $\implies$ working-hours, evening\\ evening $\implies$ working-hours}},
        1/below left//{\parbox[c]{3.5cm}{evening $\implies$ working-hours, late-evening}},
        1/above left//{Sat},
        2/below right//{\parbox[c]{5.0cm}{working-hours $\implies$ early-morning\\ night $\implies$ working-hours, late-evening,\\\phantom{night $\implies$} early-morning, evening %\\ working-hours, evening $\implies$ early-morning
        }},
        2/above right//{Mo-Fr},
        3/below left//{\parbox[c]{3.5cm}{working-hours $\implies$ evening}},
        3/above left//{Sun}
      } \coordinate[label={[\labelopts]\labelpos:{\labelcontent}}](c) at (\nodename);
    \end{scope}
  \end{scope}
\end{tikzpicture}}
  \caption{The lattice of conditional implications of the running example $\Tex$ with simplified labels, which consist of the relative canonical base with respect to the implications in all nodes below. We omit the top label of implications as the extent of this concept is always $\Impm$.}
  \label{fig:lattice-conditional-implications-example}
\end{figure}
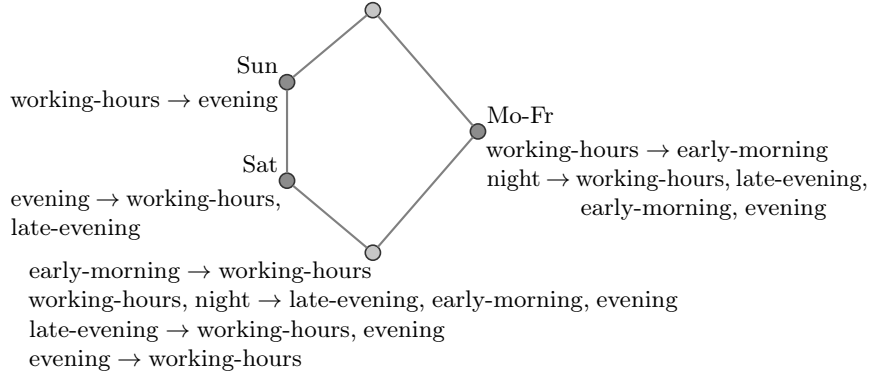
% \textbf{    [TODO -- explain the lattice]} 
%
Looking at the implications from \cref{ex:conditional-implications}, we find the implication
  $\emph{\text{early-morning}}\implies\emph{\text{working-hours}}$
  at the \emph{bottom node}, because it holds for all three conditions, whereas we find the implication
  $\emph{\text{working-hours}}\implies \emph{\text{evening}} $
  at the node for \emph{Sunday}, because that is the only condition for which it holds.
\end{exmpl}

\subsection{Triadic Exploration}
Now, we develop \emph{Triadic Exploration} to explore the conditional implications of a triadic domain.

Previously, we have structured the conditional implications of a triadic domain $\T$ as a system of implication theories by utilizing the context of conditional implications $\mathcal{C}_{imp}(\T)$. This was possible because we had complete information about the domains implications in the context $\T$.
However, it is easy to imagine a situation where we can access the information about a domain only indirectly through a domain expert and where an attribute exploration might be useful.
For our running example, imagine someone with a bus and train schedule where the information can be looked up but is not fully available at once.
Now the question is: How to explore the complete system of conditional implications?

% %%% V1 start
% In principle, the implication theory for each fixed set of conditions can be considered independently by exploring the respective subposition of condition contexts.
% Therefore, a naive approach to explore the triadic domain is to explore the implication theory for each set of conditions.
% This is clearly not a good idea because it means answering many questions multiple times for every condition.

% A better approach might be to only explore the implication theory for every condition, each providing one column in the context of conditional implications $\mathcal{C}_{imp}$. Then we can compute the concept lattice without any further interactions with the expert.
% %%% V1 end

%%% V1.5 start
A naive approach is to explore the implication theory for each fixed subset of the conditions, essentially exploring each node of the system of implication theories independently.
But, this is clearly not a good idea; it means answering many questions multiple times for each condition.

A better approach might be to only explore the implication theory for every condition, each providing one column in the context of conditional implications $\mathcal{C}_{imp}$. Then we can compute the concept lattice without any further interactions with the expert.
%%% V1.5 end

%% V2 start
% In principle, the implication theories for each condition are independent.  Therefore, one approach to explore the system of implication theories is to explore the implication theory of every condition, each providing one column in the context of conditional implications $\mathcal{C}_{imp}$.  Then we can compute the concept lattice without any further interactions with the expert. 
%% V2 end

However, there are some points to consider that suggest a different approach, cf. \cite{ganter2016conceptual}:
First, to stay in the triadic setting, a complete counterexample to a question should
describe the new object by the attributes it has \emph{for each of the conditions}, and not only for the one, that is currently under consideration.
And second, some implications may hold for several conditions and the domain expert might want to confirm each of them for multiple conditions at once.

Thus, we come back to the context of conditional implications.
Ganter and Obiedkov suggested to explore the triadic domain by exploring the nodes in the lattice of conditional implications from the bottom up; using the already known valid implications as background knowledge.
Hence, as we explore the system of conditional implications, we successively fill the context of conditional implications.
 
In the following we describe the nested process of exploring the nodes of the concept lattice of conditional implications with the help of two algorithms: \cref{alg:explore-conditions} for the exploration of the conditional implications for a fixed set of conditions and \cref{alg:triadic-exploration} that uses this algorithm as a subroutine to explore all conditional implications of the triadic domain.

\subsubsection{Explore Conditional Implications for a fixed set of Conditions.}

For a fixed set of conditions $D\subseteq B$ in a triadic domain $\T=\GMBY$, the exploration algorithm is an adapted version of the algorithm for attribute exploration with background implications and exceptions, see \cite{ganter2016conceptual,stumme96attribute}.
In \cref{alg:explore-conditions} we present an implementation for the exploration in pseudo-code.

The algorithm starts with some background knowledge, in particular: A triadic context $\E=(G_E,M,B,Y_E)$, that contains some examples from the domain $\T$, and a set of implications $\L_0$ that are known to hold for all conditions in $D$; both of these can be empty.
The rest of the domain can only be accessed by the algorithm through interaction with the domain expert.
In each step, the algorithm determines the next implication $A\implies A''$ to ask the expert.
To determine the next question $A\implies A''$ the algorithm uses both the information from the examples in $\E$ and the known valid implications in $\L$.
It automatically skips questions that follow from the implications in $\L$ or for which $\E$ already contains a counterexample.
More precisely, $A$ is the next \emph{relative pseudo-intent} in $subpos(\{\K_d|d\in D\})$, i.e., the lectically smallest set $A$ closed under the set of known valid implications and background implications $\L$ that is not already closed in the subposition context of examples for the conditions in $D$.

Essentially, this algorithm is an attribute exploration with background implications on the subposition of the condition contexts.  Additionally, it tracks which implications hold for which conditions in $D$. This enables us to reduce the amount of interaction required from the expert in subsequent explorations by preventing to ask the same question multiple times for different subposition contexts.
The proof of correctness for \cref{alg:explore-conditions} is a straightforward adaption of the proof of \cite[Theorem 6]{stumme96attribute} and we therefore omit the details.

Note that we chose to collect all implications that are asked about and the subset of conditions of $D$ for which they hold in \cref{alg:A1L13} instead of only adding the implications that hold for all conditions in the context $\C$.
Hence, if there is a counterexample, i.e., the implication does not hold for $D$, we track for which subset of $D$ (if any) the implication does hold.
This further reduces the number of questions posed in later explorations.  The trade-off is that the background knowledge we have is not just of nodes below the currently explored one in the lattice but may also contain implications that first hold for the conditions of the current node.  This has no effect on the implication theory of the node but somewhat complicates the labeling of the node -- we cannot simply use the relative canonical base with respect to the knowledge we have.
In contrast, if we only added the implications that hold for all conditions in the current exploration then the labels are exactly the implications of the relative canonical base, but, we might have to ask some questions multiple times for some of the conditions.  For our running example this approach further reduces the number of questions posed to the expert from fifteen to twelve, cf. \cref{ex:triadic-exploration-running-example} in \cref{subsec:triadic-exploration-running-example}.
  
\begin{algorithm}[t]
    \small \SetKwComment{Comment}{}{} \SetKw{Kwin}{in} 
    \DontPrintSemicolon \SetAlgoLined
    \SetKwInOut{KWInteractive}{Interactive Input}
    \SetKwComment{ic}{}{}
    \KwIn{
\parbox[t]{10cm}{
      a set of conditions $D\subseteq B$,
      a triadic context $\E=(G_E,M,B,Y_E)$ of examples (possibly empty) and 
      a set $\L_0$ of background implications known to hold for all conditions in $D$ (also possibly empty)}
    }
    \KWInteractive{\parbox[t]{8.25cm}{
      $\ (\star)$ The expert confirms or rejects an implication to hold for the set of conditions $D$. Upon rejection the expert provides a counterexample $g$ from the domain together with its relation to all conditions and all attributes, i.e., the context $\K_g:=(M,B,I)$ where $(m,b)\in I \Leftrightarrow$ $g$ has $m$ under the condition $b$ in the domain.}
    }
    \KwOut{
\parbox[t]{9.7cm}{
      the relative canonical base $\L\setminus\L_0$ of implications that hold for all conditions in $D$ with respect to $\L_0$, a possibly enlarged triadic context of counterexamples $\E$ and the formal context $\C$ of asked implications and the conditions for which they hold.}}
   $\L:=\L_0$\;
   $A:=\emptyset$\;
   $\C:=(\emptyset,B,\emptyset)$\;
   \While{$A \not = M$}
   {
     \While{ $A\not = A''$ in $S$ where\;
       $S:= (G_S,M,J) = $ subposition of $\K_d$ for $d\in D$ with\;
       $K_d:= (G_E,M,I_d)$ where $(g,m)\in I_d \Leftrightarrow (g,m,d)\in Y_E$\;
      }{
     Ask the expert if $A\implies A''$ holds for all conditions $d\in D$\ic*{$(\star)$}
     \lIf{$A\implies A''$ holds}{$\L := \L \cup \{A\implies A''\}$}
     \lElse{extend $\E$ with the counterexample provided by the expert\ic*{$(\star)$}}
     extend $\C$ with the object $A\implies A''$ and its relation to all conditions $d\in D$ \label{alg:A1L13}\ic*{$(\star)$}
   }
   $A:= \operatorname{NextClosure}(A,M,\L)$ \tcc*{computes the next closure of $A$ in $M$ with respect to the implications in $\L$; see for example \cite{GanterWille1999,ganter2016conceptual}}
 }
 \KwRet $\L\setminus \L_0$, $\E$ and $\C$\;
 \caption{explore-conditions}
 \label{alg:explore-conditions}
\end{algorithm}

\subsubsection{The Order of Explorations.}
To determine the sequence in which the nodes of the lattice of conditional implications are explored, Ganter and Obiedkov further suggested to follow a linear extension of the lattice of conditional implications, see~\cite{GanterObiedkov04Triadic}, and later specified this to follow the \emph{NextExtent-Algorithm}, i.e., \emph{NextClosure} on the extents, on the context of conditional attribute implications, see~\cite{ganter2016conceptual}.

However, in our setting the \emph{NextExtent-Algorithm} does not fit.
The problem is that we may not have the necessary information to correctly determine the next node to explore.\footnote{For the same reason, the nested application of NextClosure for computing all concepts of a triadic context, as described in \cite{trias2006jaeschke,JaeschkeHothoEtAl08jws}, cannot serve as a base for the triadic exploration.} 
This is because the questions that are asked during the exploration of a node are not guaranteed to discriminate between the conditions that are being explored.  Questions that would discriminate between conditions are not asked if there already exists a counterexample for any one of the conditions.
This might result in not exploring all nodes of the lattice.
\begin{exmpl}
Let us illustrate the problem with a small example:
Take a look at the domain given by the triadic context $\T_1$ in \cref{ex:counterex-context-family}.
If we explore this context and begin with the bottom node, i.e., the implications that hold for all conditions without any background knowledge, then
the first question posed to the expert is \emph{$\emptyset \implies ab?$}, which the expert refutes with a counterexample -- object $1$ with all its attributes under all conditions.
It substantiates that implication holds for neither of the conditions.
The second question that is posed to the expert is $b\implies a?$ which the expert confirms.
This concludes the exploration of the bottom node in this example.
If we now compute the next extent in the resulting context of conditional implications $\C$ in order to determine which node to explore next, we obtain $\operatorname{NextExtent}(\emptyset) = \{b\implies a\}$ with intent $\{d_1,d_2\}$ which we just explored and then $\operatorname{NextExtent}(\{b\implies a\}) = G_{\C}$ with intent $\emptyset$ which concludes the exploration.
However, clearly the implication $\emptyset\implies a$ holds in $d_1$ but not in $d_2$ and is missing in $\C$.
And, the question  $\emptyset\implies a?$ was not posed to the expert because there already existed a counterexample for condition $d_2$ after the first question.
Similarly, the implication $a\implies b$ holds in $d_2$ but not in $d_1$ and is also missing.  In \cref{fig:counterex-figure}, we present both the lattice of $\C$ and the lattice of $\mathcal{C}_{imp}(\T_1)$.
Hence, an exploration that uses the \emph{NextExtent-Algorithm} to determine which nodes of the conditional implications lattice to explore next does not necessarily explore all nodes of the lattice.
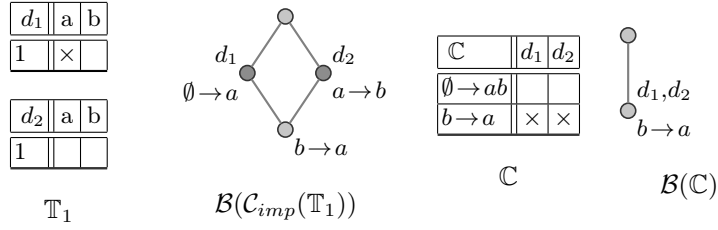
\begin{figure}[t]
\centering
\begin{minipage}{.25\textwidth}
  \centering
\begin{cxt}%
  \cxtName{$d_1$}%
  \cxtNichtKreuz{}%
  \att{a}%
  \att{b}%
  \obj{x.}{1}
\end{cxt}

\vspace{1em}
\begin{cxt}%
  \cxtName{$d_2$}%
  \cxtNichtKreuz{}%
  \att{a}%
  \att{b}
  \obj{..}{1}
\end{cxt}
\captionof*{figure}{$\T_1$}
\end{minipage}%
\begin{minipage}{0.24\textwidth}
  \colorlet{mivertexcolor}{black}
\colorlet{jivertexcolor}{black}
\colorlet{vertexcolor}{mivertexcolor!50}
\colorlet{bordercolor}{black!80}
\colorlet{linecolor}{gray}
\tikzset{vertexbase/.style={semithick, shape=circle, inner sep=2pt, outer sep=0pt, draw=bordercolor},%
  vertex/.style={vertexbase, fill=vertexcolor!45},%
  mivertex/.style={vertexbase, fill=mivertexcolor!45},%
  jivertex/.style={vertexbase, fill=jivertexcolor!45},%
  divertex/.style={vertexbase, top color=mivertexcolor!45, bottom color=jivertexcolor!45},%
  conn/.style={-, thick, color=linecolor}%
}
  \begin{tikzpicture}
  \begin{scope} %for scaling and the like
    \begin{scope} %draw vertices
      \foreach \nodename/\nodetype/\xpos/\ypos in {%
        0/vertex/1/0,
        1/divertex/0.5/0.75,
        2/divertex/1.5/0.75,
        3/vertex/1/1.5
      } \node[\nodetype] (\nodename) at (\xpos, \ypos) {};
    \end{scope}
    \begin{scope} %draw connections
      \path (0) edge[conn] (1);
      \path (0) edge[conn] (2);
      \path (1) edge[conn] (3);
      \path (2) edge[conn] (3);
    \end{scope}
    \begin{scope} %add labels
      \foreach \nodename/\labelpos/\labelopts/\labelcontent in {%
        0/below right//{$b \implies a$},
        1/below left//{$\emptyset \implies a$},
        1/above left//{$d_1$},
        2/below right//{$a\implies b$},
        2/above right//{$d_2$}
      } \coordinate[label={[\labelopts]\labelpos:{\labelcontent}}](c) at (\nodename);
    \end{scope}
  \end{scope}
\end{tikzpicture}
\captionof*{figure}{$\mathcal{B}(\mathcal{C}_{imp}(\T_1))$}
\end{minipage}%
\begin{minipage}{.24\textwidth}
  \centering
  \begin{cxt}%
  \cxtName{$\C$}%
  \cxtNichtKreuz{}%
  \att{$d_{1}$}%
  \att{$d_{2}$}
  \obj{..}{$\emptyset \implies ab$}
  \obj{xx}{$b \implies a$}
\end{cxt}
\captionof*{figure}{$\C$}
\end{minipage}%
\begin{minipage}{0.15\textwidth}
  \colorlet{mivertexcolor}{black}
\colorlet{jivertexcolor}{black}
\colorlet{vertexcolor}{mivertexcolor!50}
\colorlet{bordercolor}{black!80}
\colorlet{linecolor}{gray}
\tikzset{vertexbase/.style={semithick, shape=circle, inner sep=2pt, outer sep=0pt, draw=bordercolor},%
  vertex/.style={vertexbase, fill=vertexcolor!45},%
  mivertex/.style={vertexbase, fill=mivertexcolor!45},%
  jivertex/.style={vertexbase, fill=jivertexcolor!45},%
  divertex/.style={vertexbase, top color=mivertexcolor!45, bottom color=jivertexcolor!45},%
  conn/.style={-, thick, color=linecolor}%
}
  \begin{tikzpicture}
  \begin{scope} %for scaling and the like
    \begin{scope} %draw vertices
      \foreach \nodename/\nodetype/\xpos/\ypos in {%
        0/vertex/1/0,
        1/vertex/1/1
      } \node[\nodetype] (\nodename) at (\xpos, \ypos) {};
    \end{scope}
    \begin{scope} %draw connections
      \path (0) edge[conn] (1);
    \end{scope}
    \begin{scope} %add labels
      \foreach \nodename/\labelpos/\labelopts/\labelcontent in {%
        0/below right//{$b \implies a$},
        0/above right//{$d_1,d_2$}
      } \coordinate[label={[\labelopts]\labelpos:{\labelcontent}}](c) at (\nodename);
    \end{scope}
  \end{scope}
\end{tikzpicture}
\captionof*{figure}{$\mathcal{B}(\C)$}
\end{minipage}%
\caption{
  %From left to right:
  A triadic context $\T_1$,
  the lattice of conditional implications of $\T_1$,
  the context $\C$ after exploring the conditional implications of $\T_1$ using the \emph{NextExtent-Algorithm} to determine the next conditions to explore,
  and the lattice of $\C$ }
\label{ex:counterex-C-lattice}
\label{ex:counterex-C-imp}
\label{ex:counterex-context-family}
\label{ex:counterex-C-imp-lattice}
\label{fig:counterex-figure}
\end{figure}
\end{exmpl}

To circumvent this problem, we use the suggested strategy of exploring the lattice node by node from the bottom up with the already known valid conditional implications in $\mathcal{C}_{imp}$ as background knowledge.  But, instead of using the \emph{NextExtent-Algorithm} to incrementally determine the next combination of conditions to explore, we simply follow a linear extension of $(\mathcal{P}(B)\setminus \emptyset, \supseteq)$.
Which means, we walk through all subsets of $B$ sorted by their cardinality from biggest to smallest and stop when we have explored all subsets of cardinality one.
At first glance this might look as if we explore more nodes than necessary, because the implication theory of a condition might be included in another ones and thus is explored at least twice -- once in combination and once alone. But, because we only ask questions about implications that are unknown with respect to the knowledge we already have when the condition is explored alone, these questions won't be asked again.

In \cref{alg:triadic-exploration} we present the algorithm for \emph{triadic exploration} in pseudo-code:
We walk through $(\mathcal{P}(B)\setminus \emptyset, \supseteq)$, i.e. the subsets of conditions, in \cref{alg:A2L1}.
For each set of conditions $D\subseteq B$ we determine the implications $\L$ that are  known to hold for all conditions in $D$ in \cref{alg:A2L2}.
We compute the canonical base relative to $\L$ in the subposition of condition context of $D$ \cref{alg:A2L3}.
And, update the known examples $E$ and the known implications in \cref{alg:A2L4,alg:A2L5}.

% Explore all conditional implication theories beginning with the theory that holds for all conditions and then going upwards exploring the implication theories of subsets of conditions from cardinality $|B|$ to cardinality 1.  For each exploration we use the implication theories of supersets as background knowledge for the exploration as well as the examples known so far to drastically reduce the number of implications that can not already be inferred.  Thus the expert must answer only a fraction of the questions that would be posed without this background knowledge and the naive approach of exploring every set of conditions independently.
\begin{algorithm}[t]
    \small \SetKwComment{Comment}{}{} \SetKw{Kwin}{in}
    \DontPrintSemicolon \SetAlgoLined
    \SetKwInOut{KWInteractive}{Interactive Input}
    \SetKwComment{ic}{}{}
    \KwIn{a triadic context $\E=(G_E,M,B,Y_E)$ of examples (possibly empty) and
     a context $\C=(G_{\C},B,I_{\C})$ of implications known to hold for some conditions}
    \KwOut{a triadic context of counterexamples $\E$ and the context of conditional implications $\C$, from which all valid conditional implications can be inferred}

    \For {$D$ in linear extension of $(\mathcal{P}(B)\setminus \emptyset, \supseteq)$\label{alg:A2L1}}{
      $\L:= D'$ (in $\C$)\; \label{alg:A2L2}
      %$\L_D,C_D,E_D := \operatorname{explore-conditions}(D,E,\L)$\;
      $\L_D,\E_D,\C_D := \operatorname{explore-conditions}(D,\E,\L)$\;\label{alg:A2L3}
      %$C_D := (\L_D,B,\L_D\times D)$\;
      $\E:=\E_D$\;\label{alg:A2L4}
      % $C:=C_D$ (rather
      $\C:= \C\cup \C_D = (G_{\C}\cup G_{\C_D},B, I_{\C}\cup I_{\C_D})$\;\label{alg:A2L5}   
    }
  
    \KwRet $\E$ and $\C$ \;
    \caption{triadic-exploration}
    \label{alg:triadic-exploration}
  \end{algorithm}

  \subsection{An Example for Triadic Exploration}
  \label{subsec:triadic-exploration-running-example}
  \begin{exmpl}\label{ex:triadic-exploration-running-example}
  We now give a brief example for a triadic exploration of the domain of our running example (\cref{ex:triadic-context}):
  Let us assume we only have a triadic expert for this domain and not the whole domain information -- imagine someone with access to a search interface for the bus and train schedule of \cref{fig:triadic-context-as-context-family}.
  In \cref{tbl:conditional-exploration} we have listed all interactions with the expert. Each row shows one interaction and the order of interactions is from top to bottom.
  The resulting lattice of conditional implications is exactly the lattice shown in \cref{fig:lattice-conditional-implications-example}.  The extent of each concept of this lattice is a generating set for the implication theory of implications that hold for all conditions of the intent which follows from \cref{thm:relative-canonical-base}, and, because we iteratively computed relative canonical bases. Thus, we know that, for each concept , the implications in its extent are complete, but -- as a union of ``stacked'' relative canonical bases -- not necessarily irredundant.
  \end{exmpl}
  
\begin{figure}[t]
  \footnotesize
\resizebox{1\linewidth}{!}{
\begin{tabular}{l|l|l|l}
Conditions & Question -- does the implication hold? & Question holds for & Answer\\
  \hline
Mo-Fr, Sat, Sun &	 $\emptyset $$\ \implies \ $working-hours, late-evening, early-morning, night, evening &	 $\emptyset $  &	 RT5 \\
Mo-Fr, Sat, Sun &	 $\emptyset $$\ \implies \ $working-hours, late-evening, evening &	 $\emptyset $  &	 52 \\
Mo-Fr, Sat, Sun &	 evening$\ \implies \ $working-hours, late-evening &	 Sat, Sun &	 7 \\
Mo-Fr, Sat, Sun &	 evening$\ \implies \ $working-hours &	 Mo-Fr, Sat, Sun &	 true \\
Mo-Fr, Sat, Sun &	 night$\ \implies \ $working-hours, late-evening, early-morning, evening &	 Mo-Fr &	 N3 \\
Mo-Fr, Sat, Sun &	 early-morning$\ \implies \ $working-hours &	 Mo-Fr, Sat, Sun &	 true \\
Mo-Fr, Sat, Sun &	 late-evening$\ \implies \ $working-hours, evening &	 Mo-Fr, Sat, Sun &	 true \\
Mo-Fr, Sat, Sun &	 working-hours, night$\ \implies \ $late-evening, early-morning, evening &	 Mo-Fr, Sat, Sun &	 true \\
Mo-Fr, Sun &	 working-hours$\ \implies \ $evening &	 Sun &	 55 \\
Mo-Fr, Sat &	 working-hours, evening$\ \implies \ $early-morning &	 Mo-Fr &	 500 \\
Sun &	 working-hours, late-evening, early-morning, evening$\ \implies \ $night &	 $\emptyset $  &	 4 \\
Mo-Fr &	 working-hours$\ \implies \ $early-morning &	 Mo-Fr &	 true \\
\end{tabular}
}
%\smallskip
\caption{Triadic Exploration of the running example.  Each row represents one interaction with the expert.  It comprises the set of conditions that is explored, the question posed in form of an implication, the conditions for which the implication holds, and, the answer given by the expert.}

\label{tbl:conditional-exploration}
\end{figure}

%%%% context of conditional implications from the running example : commented out for now
%  \begin{figure}[t]
%   \centering 
%   \input{resources/explored-implication-context}   
%   \caption{The resulting context of conditional implications $\mathcal{C}$}
%   \label{cxt:triadic-exploration-example-result} 
% \end{figure}
% %\textbf{[TODO -- add context of counterexamples e]} 

\section{Application for Exploration with multiple Experts}
\label{sec:appl-mult-experts}

In this section we discuss how to adapt triadic exploration to a setting where we have multiple experts with different views on a domain (i.e., a set of attributes).  In \cref{sec:introduction}, we have briefly discussed the problem of exploration with multiple experts with different views and concluded that combining answers from different experts is not a good strategy for attribute exploration in general.  We have further established that all previous methods for multi-expert exploration avoided this problem by assuming that the experts' knowledge is derived from some consistent domain knowledge.

Here, we suggest a different approach that allows for a group of experts with different, opposing views on a domain.  The basic idea is to accept all answers equally and look for the subset of knowledge that all experts agree on. To explore the domain we then explore the agreed-upon knowledge of different subsets of the expert group.

If all experts know about the same objects of the domain we can regard the group of experts as a triadic domain where each experts view is expressed as one condition.
In our running example, imagine that there are three experts for the bus and train schedule: One for Monday-Friday, one for Saturday and one for Sunday. The three experts will have different opinions about the implication theory of the time slots.

To explore the dependencies of attributes in this triadic multi-expert domain, we utilize triadic exploration.
To ask about a conditional implication then means to ask all experts if the implication holds in their view.
However, a simple translation back to the triadic case means that each time an expert gives a counterexample to a question, all experts must be consulted about their view on the counterexample to stay within the triadic setting (because we need the full slice of the triadic context). 
This is not ideal, but we can adapt the triadic exploration to avoid this issue:  Since we do not rely on any specific properties of the triadic context other than being able to form the subposition of the condition contexts $\K_d$, we can simply leave the triadic setting behind and transfer the idea of conditional attribute implications to a setting where we replace the triadic context with a context family on the same set of attributes (but not necessarily the same set of objects), i.e., a context family $\{\K_e=(G_e,M,I_e)| e\in E\}$ for a group of experts $E$.

Note that we could also explore the implication theory for each context in such a context family independently and combine the results afterwards, as initially suggested in \cref{sec:triadic-exploration}.
It is not obvious how this approach compares to the triadic one.
However, the triadic approach also allows to only explore a subset of the system of implication theories.
 
A real world example for such a context family can be found in the \emph{BSI-IT-Grundschutzkatalog}\footnote{ 
  \url{https://www.bsi.bund.de/EN/Topics/ITGrundschutz/itgrundschutz_node.html}}, a publication by the German \emph{Federal Office for Information Security}, which contains security recommendations on a wide variety of IT topics.  There, a general set of elementary threats is defined and for topics where these threats are present (for example organizational, infrastructure and personnel) a set of measures is defined where each measure combats one or multiple of the threats.  Hence, if we regard the elementary threats as attributes, the topics as conditions/experts and the measures as objects, we have a context family on the same set of attributes but with different object sets.

  To explore the domain of such a context family, where the set $G$ varies for the different conditions, we have to slightly alter \cref{alg:triadic-exploration,alg:explore-conditions}.  In particular, we need to replace the triadic contexts with context families and the conditions with the experts.  The triadic context of counterexamples becomes a context family of counterexamples where each expert has their own context of counterexamples and the objects between them can differ.  Hence, in \cref{alg:explore-conditions},  $A''$ is computed on the subposition of the respective contexts of counterexamples and asking about the implication $A\implies A''$ means asking each of the experts.  Now, counterexamples from one expert can be accepted without having to ask all other experts about their view on the example.
  
If we explore a triadic domain in this more abstract setting of context families on the same set of attributes, the trade-off is that we obtain less complete information about the counterexamples.  However, we still obtain the same knowledge in terms of conditional attribute implications that hold in the domain. 

In addition, we gain the ability to explore context families that do not fit the triadic setting or only do so after some modifications, as for example, the context family of the \emph{BSI-IT-Grundschutzkatalog}.  Another example can be derived from the running example:  If we look at $\K_{\text{Mo-Fr}}$ in \cref{fig:triadic-context-as-context-family}, imagine that instead of one context (and thus one expert) of bus and tram lines we had one for bus lines and one for tram lines.  Clearly, this family of two contexts could be transformed into a triadic context, however, to do so we would have to add bus lines to the tram lines context and vice versa -- mixing domains that might be perceived as different.

\section{Conclusion and Outlook}
\label{sec:conclusion-outlook}
In this paper, we addressed the problem of multi-expert attribute exploration in Formal Concept Analysis.  To this end, we developed \emph{triadic exploration} -- an analogue to attribute exploration -- for Triadic Concept Analyis, which extends Formal Concept Analysis with the notion of conditions.  Triadic exploration helps a triadic domain expert to explore the structure of the conditional attribute implications of the domain.

We adapted triadic exploration to a multi-expert setting by considering the experts' views of a domain as conditions in a triadic setting.  We discussed the ramifications of this approach and subsequently suggested to adapt triadic exploration to the more general setting of context families on the same set of attributes.

This paper is a step towards multi-expert exploration where experts can have different views on a domain.  In contrast to the few prior works on this subject, here the experts can have opposing views.  A next step is the combination of this approach with the notion of partial expert knowledge and a more in depth study of context families as a foundation for multi-expert explorations.

\printbibliography

\end{document}